\def\eqref#1{equation~\ref{#1}}
\def\1{\bm{1}}
\DeclareMathAlphabet{\mathsfit}{\encodingdefault}{\sfdefault}{m}{sl}
\SetMathAlphabet{\mathsfit}{bold}{\encodingdefault}{\sfdefault}{bx}{n}
\newtheorem{theorem}{Theorem}
\title{  Reward Dimension Reduction for Scalable  Multi-Objective Reinforcement Learning }
\author{Giseung Park, Youngchul Sung \thanks{Youngchul Sung is the corresponding author.} \\
School of Electrical Engineering \\
Korea Advanced Institute of Science and Technology (KAIST) \\
Daejeon 34141, Republic of Korea \\
\texttt{\{gs.park,ycsung\}@kaist.ac.kr}
}
\begin{document}

\maketitle

\begin{abstract}
    In this paper, we introduce a simple yet effective reward dimension reduction method to tackle the scalability challenges of multi-objective reinforcement learning algorithms. While most existing approaches focus on optimizing two to four objectives, their abilities to scale to environments with more objectives remain uncertain. Our method uses a dimension reduction approach to enhance learning efficiency and policy performance in multi-objective settings. While most traditional dimension reduction methods are designed for static datasets, our approach is tailored for online learning and preserves Pareto-optimality after transformation. We propose a new training and evaluation framework for reward dimension reduction in multi-objective reinforcement learning and demonstrate the superiority of our method in environments including one with sixteen objectives, significantly outperforming existing online dimension reduction methods.
\end{abstract}

\section{Introduction}

Reinforcement Learning (RL) is a powerful machine learning paradigm focused on training agents to make sequential decisions by interacting with their environment. Through trial and error, RL algorithms allow agents to iteratively improve their decision-making policies, with the ultimate goal of maximizing cumulative rewards. In recent years, the field of Multi-Objective Reinforcement Learning (MORL) has gained considerable attention due to its relevance in solving real-world control problems involving multiple, often conflicting, objectives. {These problems span across domains such as advanced autonomous control \citep{weber2023learning}, power system management, and logistics optimization \citep{hayes22survey}, where balancing trade-offs among competing objectives is crucial \citep{roijers13survey}.}

MORL extends the traditional RL framework by enabling agents to handle multiple objectives simultaneously. This requires methods capable of identifying and managing trade-offs among these objectives.  MORL specifically focuses on learning a set of policies that approximate the Pareto frontier, representing solutions where no objective can be improved without compromising others. Most of the current approaches scalarize vector rewards into scalar objectives to generate a diverse set of policies \citep{abels19moq,yang19envq,xu20pgmorl,basaklar23pdmorl,lu23capql}, thereby avoiding the need for retraining during the test phase.

Although these methods have proven effective in standard MORL benchmarks \citep{felten_toolkit_2023}, most benchmarks involve only two to four objectives, leaving open the question of whether existing  MORL algorithms can scale effectively to environments with more objectives \citep{hayes22survey}. Indeed, various practical applications demand optimizing many objectives simultaneously \citep{li15manyobjsurvey}. For example, \cite{fleming05manyobjex1} introduced an example of optimizing a complex jet engine control system that requires balancing eight physical objectives. In military contexts, a commander should manage dozens of objectives that directly influence decision-making \citep{dagistanli2023military}, including the positions of allies and enemies, casualty rates, combat capabilities of allies and enemies, and time estimations for achieving strategic goals. When planning for multiple potential battle scenarios, exploring such high-dimensional objective space in its raw form is inefficient and very challenging  due to the complexity of the original space {\citep{wang2013hypervolume}}.

An advantageous feature of many real-world MORL applications is that objectives often exhibit correlations, leading to inherent conflicts or trade-offs. For example, an autonomous vehicle must balance safety and speed, where optimizing one can compromise the other. Similarly, traffic light control must manage multiple interrelated objectives to ensure smooth traffic flow \citep{hayes22survey}. These correlations suggest that reducing the dimensionality of the reward space while preserving its essential features could be a viable strategy to make current MORL algorithms scalable to many objective spaces.

Dimension reduction techniques \citep{roweis2000dimnonlinear,tenenbaum2000dimisomap,zass2006npca,lee2007dimnonlinear,cardot2018pca,mcinnes18dimumap,bank20aebook}, widely used in other machine learning domains, capture the most significant features of high-dimensional data while filtering out irrelevant noise. However, typical approaches operate on static datasets, whereas RL necessitates continuous data collection during online training. This introduces a unique challenge: applying dimension reduction in MORL while retaining the essential structure of the original reward space. To our knowledge, few studies have addressed this challenge within the MORL context.

In this paper, we address these challenges by introducing a simple yet effective reward dimension reduction method that scales MORL algorithms to higher-dimensional reward spaces. We  propose a new training and evaluation framework tailored for the online reward dimension reduction setting. Our approach ensures that Pareto-optimality is preserved after transformation, allowing the agent to learn and execute policies that remain effective in the original reward space.

Our contributions are as follows. First, we propose a new training and evaluation framework for online reward dimension reduction in MORL. We also derive conditions and introduce learning techniques to ensure that online training and Pareto-optimality are maintained, providing a stable and efficient approach for scalable MORL. Lastly, our method demonstrates superior performance compared to existing online dimension reduction methods in MORL environments including one with sixteen objectives.

\section{Background} \label{sec:setting}

A multi-objective Markov decision process (MOMDP) is defined by the tuple \(\langle \mathcal{S}, \mathcal{A}, P, \mu_0, r, \gamma \rangle\). Here, \(\mathcal{S}\) represents the set of states, \(\mathcal{A}\) the set of actions, \(P\) the state transition probabilities, \(\mu_0\) the initial state distribution, \(r  \) the reward function, and \(\gamma \in [0,1) \) the discount factor. Unlike the traditional single-objective MDP, the reward function \(r: \mathcal{S} \times \mathcal{A} \rightarrow \mathbb{R}^K\) in a MOMDP is vector-valued, where \(K \geq 2\) is the number of objectives. This vector-valued nature of the reward function allows the agent to receive multiple rewards for each state-action pair, each corresponding to a different objective.

In the context of MORL, the performance of a policy \( \pi \) is evaluated by its expected cumulative reward, denoted as \( J(\pi) =(J_1(\pi),\cdots,J_K(\pi)  ) := \mathbb{E}_\pi \left[ \sum_{t=0}^\infty \gamma^t r_t \right] \in \mathbb{R}^K \). To compare vector-valued rewards, we use the notion of Pareto-dominance \citep{roijers13survey}, denoted \( >_P \). For two vector returns, \( J(\pi) \) and \( J(\pi') \), we have:
\begin{equation} \label{eq:pareto_dominance_definition}
    J(\pi') >_P J(\pi) \iff (\forall i \in \{1, \dots, K\}, J_i(\pi') \geq J_i(\pi)) ~ \text{and} ~ (\exists j \in \{1, \dots, K\}, J_j(\pi') > J_j(\pi)).
\end{equation}
This means that \( J(\pi') \) Pareto-dominates \( J(\pi) \) if it is at least as good as \( J(\pi) \) in all objectives and strictly better in at least one.

The goal of MORL is to identify a policy $\pi$ whose \(J(\pi)\) lies on the Pareto frontier (or boundary) \(\mathcal{F}\) of all achievable return tuples \(\mathcal{J} = \{ (J_1(\pi),\cdots,J_K(\pi)) \mid \pi \in \Pi \}\), where \(\Pi\) denotes the set of all possible policies. The formal definition of the Pareto frontier \footnote{ Strictly speaking, the Pareto frontier can also be defined as the set of non-dominated \textit{policies}, $\{ \pi \in \Pi \mid \nexists \pi' ~ \text{s.t.} ~ J(\pi') >_P J(\pi) \}$ \citep{hayes22survey}, rather than the set of non-dominated \textit{vector returns} as shown in \eqref{eq:pareto_frontier}
. In this case, multiple policies may achieve the same vector return \citep{hayes22survey}. To avoid this redundancy, in this paper, we define the Pareto frontier and the convex coverage set as presented in \eqref{eq:pareto_frontier} and \eqref{eq:ccs}, respectively.  } is as follows \citep{roijers13survey,yang19envq}: 
\begin{equation} \label{eq:pareto_frontier}
    \mathcal{F} = \{ J(\pi) \mid  \nexists \pi' ~ \text{s.t.} ~ J(\pi') >_P J(\pi) \}.
\end{equation}
In other words, no single policy achieving $\mathcal{F}$ can improve one objective without sacrificing at least one other objective. Finding a policy achieving the Pareto frontier ensures an optimal balance among the competing objectives with the best possible trade-offs.

Researchers are also interested in obtaining policies that cover the \textit{convex coverage set} (CCS) of a given MOMDP defined as follows \citep{yang19envq}:
\begin{equation} \label{eq:ccs}
    CCS = \{ J(\pi) \mid \exists \omega \in \Delta^K ~ \text{s.t.} ~ \omega^\top J(\pi) \geq \omega^\top J(\pi'), ~ \forall \pi' \in \Pi ~ ~ \text{with} ~ J(\pi') \in \mathcal{F}  \}
\end{equation}
\begin{wrapfigure}{r}{0.4\textwidth}
    \centering
    \includegraphics[width=0.38\textwidth]{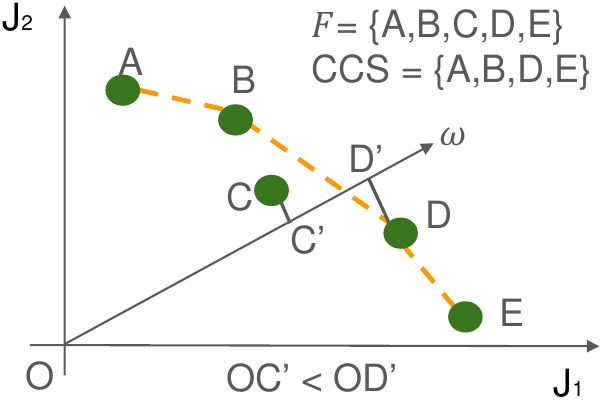} 
    \caption{ Comparison of the Pareto frontier \(\mathcal{F}\) and the CCS for \(K=2\), where \(C'\) and \(D'\) represent the projections of points \(C\) and \(D\) onto the preference vector \(\omega\), respectively. Yellow dashed line represents the outer convex boundary of $\mathcal{F}$.  }
    \label{fig:pf_and_ccs}
\end{wrapfigure}
where $\Delta^K$ is the $(K-1)$-simplex and  \(\omega \in \Delta^K\) represents a preference vector that specifies the relative importance of each objective (i.e., $\sum_{k=1}^K \omega_k = 1, \omega_k \geq 0, \forall k$). 

Figure \ref{fig:pf_and_ccs} illustrates the relationship between Pareto frontier and CCS. In Figure \ref{fig:pf_and_ccs}, we assume that the achievable points $\{A,B,C,D,E\}$ form the Pareto frontier. Then, for the preference vector $\omega$ in Figure \ref{fig:pf_and_ccs}, the inner product between $\omega$ and the return vector at the point C  is smaller than   the inner product between $\omega$ and the return vector at the point D. The inner product between $\omega$ and the return vector at the point C is smaller than that between $\omega$ and any other point in the Pareto frontier. Hence, the point C is not included in the CCS. Note that the CCS represents the set of achievable returns that are optimal for some linear combination of objectives, and it is a subset of the Pareto frontier $\mathcal{F}$  by definition. Since the weighted sum is widely used in real-world applications to express specific preferences over multiple objectives \citep{hayes22survey},  CCS is a proper refinement of the Pareto frontier.

In the context of \textit{multi-policy} MORL \citep{roijers13survey}, the goal is to find multiple policies that cover (an approximation of) either the Pareto frontier or the CCS so that during test phases, we perform well across various scenarios without having to retrain from scratch. Specifically, we aim to achieve Pareto-optimal points  that \textit{maximize the hypervolume while minimizing the sparsity} \citep{hayes22survey}.

\begin{figure}[!ht]
    \begin{center} 
    \includegraphics[width=\linewidth]{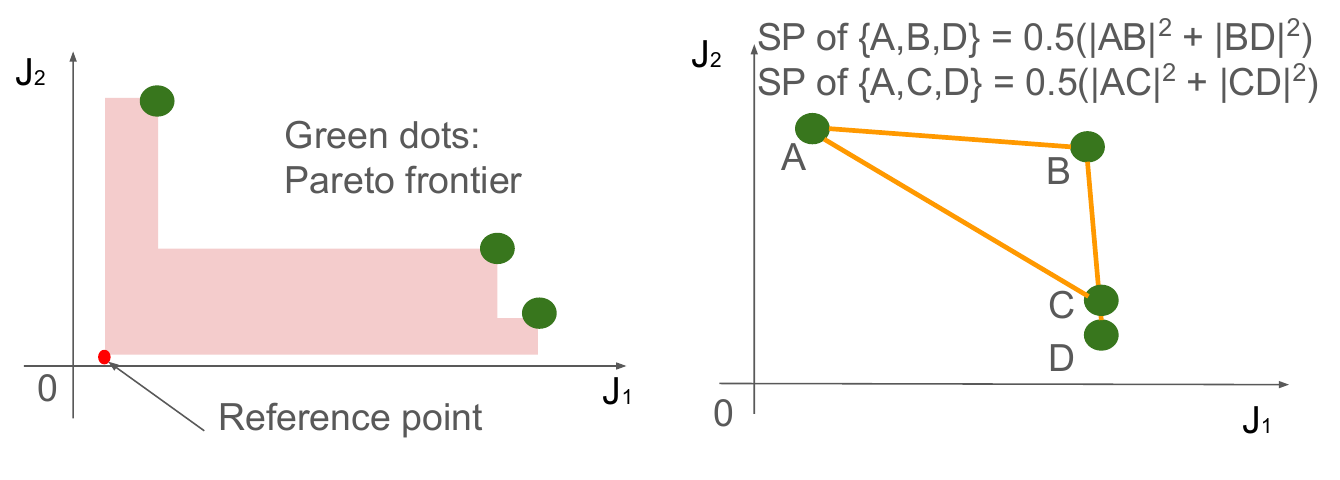}
    \end{center}
    \vspace{-20pt}
    \caption{Evaluation metrics in multi-policy MORL: hypervolume and sparsity. (Left) Hypervolume is represented by the pink area in the figure. (Right) The sparsity of the solution set \( \{A,B,D\} \) is lower than that of \( \{A,C,D\} \) when points \( C \) and \( D \) are close, indicating that \( \{A,B,D\} \) offers a more diverse set of solutions than \( \{A,C,D\} \).} \label{fig:eval_metrics}
\end{figure}

As seen in the left figure of Figure \ref{fig:eval_metrics}, the hypervolume measures the volume in the objective space dominated by the set of current Pareto frontier points and bounded by a reference point. In the figure, the hypervolume corresponds to the area of the pink region. This metric provides a scalar value quantifying how well the policies cover the objective space. Formally, let $X = \{x_1, \cdots, x_N\} \subset \mathbb{R}^{K}$ be a set of $N$ Pareto frontier points and $x_0\in\mathbb{R}^K$ be a reference point, where $x_i = (x_{i1},\ldots,x_{iK}),~i=0,\ldots,N$. Then, the hypervolume metric  $HV(X, x_0)$ is defined by the $K$-dimensional volume of the  union of hybercubes $\bigcup_{i=1}^N \mathcal{C}_{k=1}^K [x_{0k},x_{ik}]$, where $\mathcal{C}_{k=1}^K [x_{0k},x_{ik}]$ is the hypercube of which side at the $k$-th dimension is given by the line segment $[x_{0k},x_{ik}]$. 

Another metric is sparsity, which assesses the distribution of policies within the objective space. As seen in the right figure of Figure \ref{fig:eval_metrics}, a set of Pareto frontier points with low sparsity ensures that the solutions are well-distributed, offering a diverse range of trade-offs among the objectives. If there is a set of $N$ Pareto frontier points $X = \{x_1, \cdots, x_N\} \subset \mathbb{R}^{K}$ with $x_i = (x_{i1},\ldots,x_{iK}) ~ (i=1,\ldots,N$), sparsity is defined as:
\begin{equation}
    SP(X) := \frac{1}{N-1} \sum_{k=1}^K \sum_{i=1}^{N-1} ( S_k[i] - S_k[i+1] )^2
\end{equation}
where $S_k = \text{Sort}\{ x_{ik}, 1 \leq i \leq N  \}$ in descending order in the $k$-th objective, $1 \leq k \leq K$. Given a dimension $k$ and its two endpoints, $S_k[1]$ and $S_k[N]$, the Cauchy–Schwarz inequality implies that $\sum_{i=1}^{N-1} ( S_k[i] - S_k[i+1] )^2$ is minimized when the differences $S_k[i] - S_k[i+1]$ are constant for all $1 \leq i \leq N-1$. Therefore, sparsity acts as an indicator of how well-distributed a set of return vectors is. Reducing sparsity while maintaining a high hypervolume helps avoid situations where only a few objectives perform well. Therefore, considering both low sparsity and high hypervolume offers a more comprehensive evaluation criterion than relying solely on hypervolume.

\section{Related Work} \label{sec:related_work}

There are mainly two branches in MORL. The first branch is single-policy MORL, where the goal is to obtain an optimal policy $\pi^* = \arg \max_\pi h(J(\pi))$ where $h: \mathbb{R}^K \to \mathbb{R}$ is a \textit{fixed} non-decreasing utility function, mostly for non-linear one \citep{siddique20fair,park24maxmin}. The other branch is the multi-policy MORL, where we aim to acquire multiple policies that cover an approximation of the Pareto frontier or CCS. Beyond several classical methods such as iterative single-policy approaches \citep{roijers14iterative}, current approaches in the multi-policy MORL either train a set of multiple policies \citep{xu20pgmorl} or train a single network to cover multiple policies \citep{abels19moq,yang19envq,basaklar23pdmorl,lu23capql}. For completeness, these approaches should be followed by a preference elicitation method for the test phase given that additional interactive approaches are allowed \citep{hayes22survey} (e.g., \cite{zintgraf18elicit} inferred unknown user preference using queries of pairwise comparison on the Pareto frontier). Nonetheless, the area of elicitation has received far less attention than the learning methods themselves. Researchers usually focus solely on the learning algorithms during the training phase assuming that test preferences will be explicitly given.

\cite{xu20pgmorl} trains a set of multiple policies in parallel using the concept of evolutionary learning, and the best policy in the policy set is used for evaluation during the test phase. Other works construct a single policy network parameterized by \(\omega \in \Delta^K\) to cover the CCS, which is easier than direct parameterization over the set of non-decreasing functions to cover the Pareto frontier. \cite{abels19moq} and \cite{yang19envq} constructed single-policy networks to exploit the advantages of CCS. Specifically, \cite{yang19envq} defined the optimal multi-objective action-value function for all \(\omega \in \Delta^K\): $Q^*(s,a,\omega) = \mathbb{E}_{P, \pi^*(\cdot | \cdot; \omega) | s_0 = s, a_0 = a } \left[ \sum_{t=0}^\infty \gamma^t r_t  \right] \in \mathbb{R}^K$, where the optimal policy \(\pi^*  \) is given by $\pi^*(\cdot | \cdot; \omega) = \arg \sup_{\pi} \omega^\top \mathbb{E}_{P, \pi  } \left[ \sum_{t=0}^\infty \gamma^t r_t  \right]$. Based on a new definition of the multi-objective optimality operator, the authors proposed an algorithm for training a neural network $Q_\theta(s,a,\omega)$ to approximate $Q^*(s,a,\omega)$. \cite{basaklar23pdmorl} modified the multi-objective optimality operator to match each direction of the learned action-value function and preference vector, and \cite{lu23capql} tackled a learning stability issue of multi-policy MORL by providing theoretical analysis on linear scalarization.

While these methods have demonstrated promising performance in MORL benchmarks with two to four objectives, it remains an open question whether current algorithms can effectively scale to environments with more objectives \citep{hayes22survey}. {The challenge lies in effectively covering all possible preferences during training. In most previous MORL algorithms, agents sample random preferences in each episode to collect diverse behaviors. However, performing this sampling naively in high-dimensional spaces becomes computationally expensive because the coverage (or hypervolume) grows exponentially with the number of objectives \citep{wang2013hypervolume}. }

{In this paper, we address the scalability issue by proposing a reward dimension reduction technique with a suitable training and evaluation framework to narrow down the search space while preserving the most relevant information.} Our approach is motivated by the observation that objectives are correlated in many real-world cases. While a variety of dimension reduction techniques exist in machine learning \citep{roweis2000dimnonlinear,tenenbaum2000dimisomap,lee2007dimnonlinear,mcinnes18dimumap}, most are designed for static (batch-based) datasets. Only a few methods are suitable for online settings, and in some cases, no online version exists at all \citep{mcinnes18dimumap}. Developing online variants of batch-based dimension reduction techniques is itself an active area of research. Currently, incremental principal component analysis (PCA) and online autoencoders are commonly used for online dimension reduction \citep{cardot2018pca,bank20aebook}. However, we will demonstrate that they fail to preserve Pareto-optimality after transformation in the context of multi-policy MORL.

To our knowledge, few studies have explored reward dimension reduction in MORL. For instance, \cite{giuliani14water} applied non-negative principal component analysis (NPCA) to a fixed set of return vectors collected from several pre-defined scenarios, identifying the principal components. However, they did not perform any further online interactions, but multi-policy MORL algorithms require online learning. In this paper, we propose a simple yet stable method for online dimension reduction that preserves Pareto-optimality after transformation, as described in the following section.

\section{Method}
\subsection{Training and Evaluation Framework} \label{subsec:method_framework}

\begin{figure}[!ht] 
    \begin{center} 
\includegraphics[width=\linewidth]{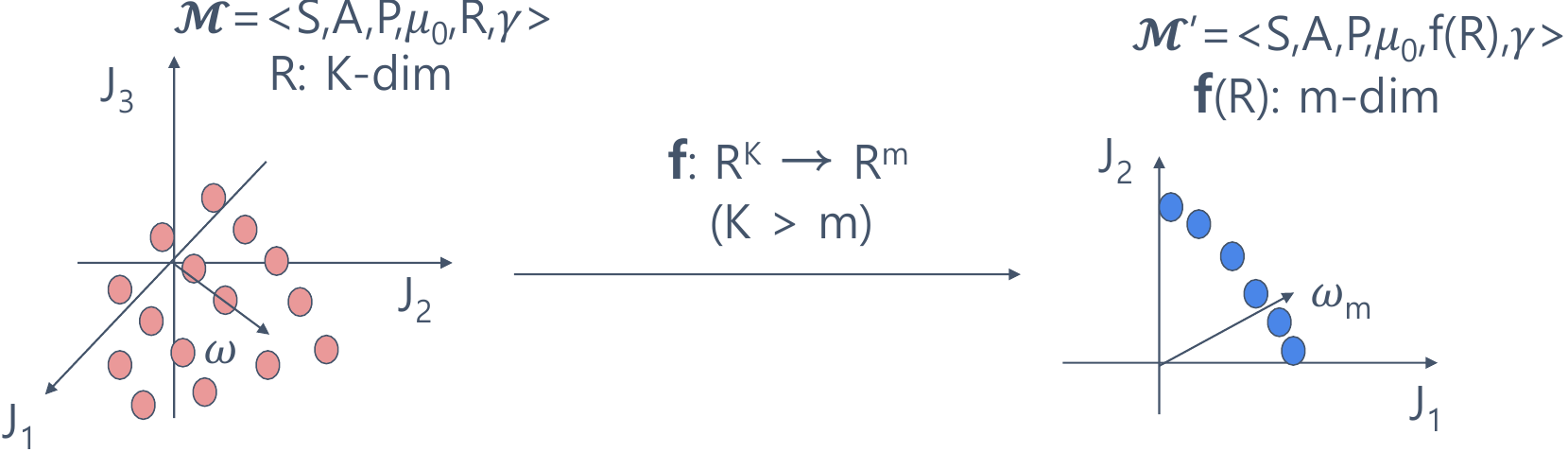}
    \end{center}
    \caption{Our proposed reward dimension reduction framework. We design a mapping function $f: \mathbb{R}^K \rightarrow \mathbb{R}^m$ from the original reward space to the reduced reward space.} \label{fig:overall_problem}
\end{figure}
As seen in Figure \ref{fig:overall_problem}, we aim to design a mapping function \( f: \mathbb{R}^K \rightarrow \mathbb{R}^m \) for reward dimension reduction, where \( K > m \geq 2 \). $f$ transforms the original MOMDP $\mathcal{M} = \langle \mathcal{S}, \mathcal{A}, P, \mu_0, r, \gamma \rangle$ into another MOMDP $\mathcal{M}' = \langle \mathcal{S}, \mathcal{A}, P, \mu_0, f(r), \gamma \rangle$, reducing the dimensionality of the reward space while preserving essential features. We assume that standard multi-policy MORL approaches, such as \cite{yang19envq}, perform adequately in the reduced-dimensional reward space of $\mathcal{M}'$. Then for any preference vector \(\omega_m \in \Delta^m\), the \((m-1)\)-simplex, the optimal multi-objective action-value function and the optimal policy are defined as:
\begin{equation} \label{eq:method_q_star}
    Q_m^*(s,a,\omega_m) = \mathbb{E}_{P, \pi_m^*(\cdot | \cdot; \omega_m) | s_0 = s, a_0 = a } \left[ \sum_{t=0}^\infty \gamma^t f(r_t)  \right] \in \mathbb{R}^m, ~~ \text{where} ~~
\end{equation}
\begin{equation} \label{eq:method_pi_star_def}
    \pi_m^*(\cdot | \cdot; \omega_m) = \arg \sup_{\pi} \omega_m^\top \mathbb{E}_{P, \pi  } \left[ \sum_{t=0}^\infty \gamma^t f(r_t)  \right] = \arg \sup_{\pi} \mathbb{E}_{P, \pi  } \left[ \sum_{t=0}^\infty \gamma^t (\omega_m^\top f(r_t))  \right].
\end{equation}
\(\pi^*_m\) is also expressed as $\pi^*_m( a | s, \omega_m) = 1~ \text{if}~ a=\arg \max_{a'} \omega_m^\top Q^*_m(s,a',\omega_m), \pi^*_m( a | s, \omega_m) = 0$ otherwise. Note that the key aspect of multi-policy MORL is that we learn the action-value function $Q_m^*(s,a,\omega_m)$ not just for a particular linear combination weight $\omega_m$ but for all possible weights $\{\omega_m \in \Delta^m\}$ in the training phase so that we can choose the optimal policy for any arbitrary combination weight depending on the agent's preference in the test or evaluation phase.

Our goal is to design a dimension reduction function, \( f \), such that the policies learned in the reduced-dimensional space achieve high performance in the \textbf{original reward space} while satisfying two key requirements: (i) {\em online updates for dimension reduction} and (ii) {\em the preservation of Pareto-optimality} in the sense that \(\forall \omega_m \in \Delta^m\),
\begin{equation} \label{eq:pareto_preserve}
    \mathbb{E}_{\pi^*_m(\cdot|\cdot, \omega_m)} \left[ \sum_{t=0}^\infty \gamma^t f(r_t) \right] \in CCS_{m} \Rightarrow \mathbb{E}_{\pi^*_m(\cdot|\cdot, \omega_m)} \left[ \sum_{t=0}^\infty \gamma^t r_t \right] \in \mathcal{F}
\end{equation}
where \(CCS_{m}\) represents the convex coverage set in the reduced reward space and \(\mathcal{F} \subset \mathbb{R}^K\) represents the Pareto frontier in the original reward space.

To the best of our knowledge, research on reward dimension reduction in MORL is limited, and there is no well-established evaluation protocol for this task. In this section, we propose a new training and evaluation framework tailored to the reward dimension reduction problem, along with the algorithm itself (outlined in Section \ref{sec:method_dim_reduce}). This framework facilitates a fair comparison of online dimension reduction techniques within the context of the original MOMDP.

During the \textit{training phase}, we aim to learn the optimal multi-objective action-value function \(Q^*_m(s,a,\omega_m)\) while we simultaneously update the dimension reduction function \(f\) online. For the action-value function update, we sample data \((s, a, r, s')\) from a replay buffer but utilize the reduced-dimensional rewards \(f(r)\) instead of the original rewards \(r\). Our goal is to ensure that  \(\mathbb{E}_{\pi^*_m(\cdot|\cdot, \omega_m)} \left[ \sum_{t=0}^\infty \gamma^t f(r_t) \right] \in CCS_{m}\) after the training phase ends.

In the \textit{evaluation phase}, the learned policy \(\pi^*_m(\cdot|\cdot, \omega_m)\) is tested on a set of \(N_e\) preferences \(\Omega_{m,N_e} \subset \Delta^m\), with \(N_e = |\Omega_{m,N_e}|\), where the points are evenly distributed on the \((m-1)\)-simplex. For each \(\omega_m \in \Omega_{m,N_e}\), we compute the expected cumulative reward \(\mathbb{E}_{\pi^*_m(\cdot|\cdot, \omega_m)} \left[ \sum_{t=0}^\infty \gamma^t r_t \right] \in \mathbb{R}^K\) in the original reward space, as the MOMDP provides the high-dimensional vector reward at each timestep. Our goal is for the Pareto frontier points of \(\{ \mathbb{E}_{\pi^*_m(\cdot|\cdot, \omega_m)} \left[ \sum_{t=0}^\infty \gamma^t r_t \right] \in \mathbb{R}^K | \omega_m \in \Omega_{m,N_e} \}\) to maximize hypervolume while minimizing sparsity.

\subsection{Design of Dimension Reduction Function} \label{sec:method_dim_reduce}
To preserve the Pareto-optimality as shown in \eqref{eq:pareto_preserve}, we impose two minimal conditions on the dimension reduction function $f$:

\begin{theorem} \label{thm:thm1}
If $f$ is affine and each element of the matrix is positive, then \eqref{eq:pareto_preserve} is satisfied.
\end{theorem}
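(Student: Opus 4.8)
The plan is to exploit the affine structure together with the strict positivity of the matrix to show that Pareto domination in the original reward space forces domination in the reduced space, and then to invoke the inclusion $CCS_m \subseteq \mathcal{F}_m$ already noted in Section~\ref{sec:setting}.

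First I would write $f(r) = Ar + b$, where $A \in \mathbb{R}^{m \times K}$ has strictly positive entries. By linearity of expectation and the identity $\sum_{t=0}^\infty \gamma^t = 1/(1-\gamma)$, the reduced return of any policy $\pi$ satisfies
\[
    \mathbb{E}_\pi\left[\sum_{t=0}^\infty \gamma^t f(r_t)\right] = A\, \mathbb{E}_\pi\left[\sum_{t=0}^\infty \gamma^t r_t\right] + \frac{b}{1-\gamma} = A\, J(\pi) + c,
\]
where $c := b/(1-\gamma)$ does not depend on $\pi$. Writing $J_m(\pi)$ for the left-hand side, this exhibits the reduced return as an affine image of the original return with a $\pi$-independent offset.

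Next I would argue the contrapositive $J(\pi) \notin \mathcal{F} \Rightarrow J_m(\pi) \notin \mathcal{F}_m$, where $\mathcal{F}_m$ denotes the Pareto frontier in the reduced space. Suppose $J(\pi) \notin \mathcal{F}$; then there is a policy $\pi'$ with $J(\pi') >_P J(\pi)$, so $v := J(\pi') - J(\pi)$ has all entries nonnegative and at least one entry $v_j > 0$. Since the offset $c$ cancels in the difference, $J_m(\pi') - J_m(\pi) = A v$, and because every $A_{ik}$ is strictly positive, each coordinate obeys $(Av)_i = \sum_k A_{ik} v_k \geq A_{ij} v_j > 0$. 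Hence $J_m(\pi') - J_m(\pi)$ is strictly positive in every coordinate, so $J_m(\pi') >_P J_m(\pi)$ and $J_m(\pi)$ is dominated by the achievable point $J_m(\pi')$, i.e. $J_m(\pi) \notin \mathcal{F}_m$.

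Finally I would close using $CCS_m \subseteq \mathcal{F}_m$, the reduced-space instance of the inclusion observed in Section~\ref{sec:setting}: if $J_m(\pi) \in CCS_m$ then $J_m(\pi) \in \mathcal{F}_m$, and the contrapositive just established forces $J(\pi) \in \mathcal{F}$, which is exactly \eqref{eq:pareto_preserve}. The proof is short, so the only genuinely load-bearing step is the coordinatewise computation $(Av)_i > 0$; the subtle point to flag is that \emph{strict} positivity of $A$ (not mere nonnegativity) is what upgrades the weak domination $v \geq 0,\ v \neq 0$ into strict domination in every reduced coordinate, since a zero column of $A$ could otherwise annihilate the strict inequality. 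Thus the hypothesis cannot be weakened to nonnegativity without additional assumptions.
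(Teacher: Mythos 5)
Your proof is correct and takes essentially the same route as the paper's: both reduce the claim to the affine identity $\mathbb{E}_\pi\left[\sum_{t=0}^\infty \gamma^t f(r_t)\right] = A\,J(\pi) + \frac{b}{1-\gamma}$, use the strict positivity of $A$'s entries to upgrade the weak dominance $v \geq 0,\ v \neq 0$ into the strict coordinatewise inequality $(Av)_i > 0$ for every $i$, and close via the inclusion $CCS_m \subseteq \mathcal{F}_m$ — your contrapositive phrasing is just the paper's proof by contradiction repackaged. Your flag that mere nonnegativity of $A$ would break the load-bearing step is also sound and consistent with the paper's ablation results on removing the positivity condition.
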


\begin{proof}
    First, if $f$ is \textbf{affine}, then $f(r) = Ar+b \in \mathbb{R}^m, ~ \text{where} ~ A \in \mathbb{R}^{m \times K}$. By linearity, $\forall \omega_m \in \Delta^m$, $\mathbb{E}_{\pi^*_m(\cdot|\cdot, \omega_m)} \left[ \sum_{t=0}^\infty \gamma^t (A r_t + b) \right] = A \bigg( \mathbb{E}_{\pi^*_m(\cdot|\cdot, \omega_m)} \left[ \sum_{t=0}^{\infty} \gamma^t r_t \right] \bigg) + \frac{1}{1-\gamma}b \in CCS_{m}$.

    Next, if each element of $A$ is \textbf{positive}, we claim that $\mathbb{E}_{\pi^*_m(\cdot|\cdot, \omega_m)} \left[ \sum_{t=0}^\infty \gamma^t r_t \right]  \in \mathcal{F}$ so that the Pareto-optimality in \eqref{eq:pareto_preserve} is preserved. 
    
    This is proved by contradiction. Given $\omega_m \in \Delta^m$, suppose $\exists \pi' \in \Pi$ s.t. $\mathbb{E}_{\pi'} \left[ \sum_{t=0}^\infty \gamma^t r_t \right] >_P \mathbb{E}_{\pi^*_m(\cdot|\cdot, \omega_m)} \left[ \sum_{t=0}^\infty \gamma^t r_t \right]$ in the original reward space. By the definition of $>_P$ in \eqref{eq:pareto_dominance_definition}, each dimension of $\mathbb{E}_{\pi'} \left[ \sum_{t=0}^\infty \gamma^t r_t \right] - \mathbb{E}_{\pi^*_m(\cdot|\cdot, \omega_m)} \left[ \sum_{t=0}^\infty \gamma^t r_t \right] \in \mathbb{R}^K$ is non-negative and at least one dimension is positive. 
    
    For each $1 \leq j \leq m$, let $a_j^\top \in \mathbb{R}^{1 \times K}$ be the $j$-th row vector of $A$ and $b_j$ be the $j$-th element of $b$. Then $a_j^\top(\mathbb{E}_{\pi'} \left[ \sum_{t=0}^\infty \gamma^t r_t \right]  - \mathbb{E}_{\pi^*_m(\cdot|\cdot, \omega_m)} \left[ \sum_{t=0}^\infty \gamma^t r_t \right] ) > 0$. In other words, we have $A \mathbb{E}_{\pi'} \left[ \sum_{t=0}^\infty \gamma^t r_t \right] >_P A \mathbb{E}_{\pi^*_m(\cdot|\cdot, \omega_m)} \left[ \sum_{t=0}^\infty \gamma^t r_t \right]$ in the reduced-dimensional space. By linearity, adding $\mathbb{E}_{\pi'} \left[ \sum_{t=0}^\infty \gamma^t b \right] = \mathbb{E}_{\pi^*_m(\cdot|\cdot, \omega_m)} \left[ \sum_{t=0}^\infty \gamma^t b \right] = \frac{1}{1-\gamma}b$ to both sides gives 
    \begin{equation} \label{eq:contradiction_low_dim}
        \mathbb{E}_{\pi'} \left[ \sum_{t=0}^\infty \gamma^t (A r_t + b) \right] >_P  \mathbb{E}_{\pi^*_m(\cdot|\cdot, \omega_m)} \left[ \sum_{t=0}^\infty \gamma^t (A r_t + b) \right].
    \end{equation}
    Since $CCS_m$ is by definition a subset of the Pareto frontier in the reduced-dimensional space, $CCS_m$ consists of vector returns in the Pareto frontier. Therefore, \eqref{eq:contradiction_low_dim} gives a contradiction since $\mathbb{E}_{\pi^*_m(\cdot|\cdot, \omega_m)} \left[ \sum_{t=0}^\infty \gamma^t (A r_t + b) \right] \in CCS_m$ is Pareto dominated by $\mathbb{E}_{\pi'} \left[ \sum_{t=0}^\infty \gamma^t (A r_t + b) \right]$. 
\end{proof}

In short, {the condition of $f(r) = Ar + b$ with $A \in \mathbb{R}^{m \times K}_+$ guarantees the preservation of Pareto-optimality in \eqref{eq:pareto_preserve}. } From \eqref{eq:method_pi_star_def},
\begin{equation} \label{eq:optimal_pi_bias_irrespective}
    \pi_m^*(\cdot|\cdot, \omega_m) = \sup_{\pi} \mathbb{E} \left[ \sum_{t=0}^\infty \gamma^t (\omega_m^\top A r_t)  \right] + \frac{1}{1-\gamma} \omega_m^\top b = \sup_{\pi} \mathbb{E} \left[ \sum_{t=0}^\infty \gamma^t (\omega_m^\top A r_t)  \right]. 
\end{equation}
In discounted reward settings, the bias term $b$ does not affect the determination of $\pi_m^*$, so we set $b=0$ for simplicity. 

In addition, we impose another condition that $A$ is \textbf{row-stochastic}: $\sum_{k=1}^K A_{jk} = 1,  1 \leq j \leq m$. Then
\begin{equation}
    \sum_{k=1}^K (A^\top \omega_m)_k = \sum_{k=1}^K \sum_{j=1}^m A_{jk} (\omega_m)_j = \sum_{j=1}^m (\omega_m)_j \sum_{k=1}^K A_{jk} = \sum_{j=1}^m (\omega_m)_j = 1.
\end{equation}
In other words, $\forall \omega_m \in \Delta^m$, we have the corresponding preference vector $A^\top \omega_m \in \Delta^K$ in the original reward space. Let $A^\top = [a_1, \cdots, a_m] \in \mathbb{R}_+^{Km}$ where $a_j \in \Delta^K, 1 \leq j \leq m$. Then $A^\top \omega_m \in \Delta^K$ and the set $\{ A^\top \omega_m |  \omega_m \in \Delta^m \} \subset \Delta^K$ is the convex combination of $a_j \in \Delta^K, 1 \leq j \leq m$. Conceptually, the role of the matrix $A$ is to narrow down the preference search space from $\Delta^K$ to a proper subset of $\Delta^K$. 

The next question is ``How should we design the affine transform $A$ to preserve the information of the original vector reward function $r$?'' To address this question, we propose constructing a reconstruction neural network $g_\phi$, where the input is the reduced-dimensional reward $f(r)$. The network $g_\phi$ is trained to minimize the reconstruction loss:
\begin{equation} \label{eq:method_reconstruct}
    \min_{A>0, ~ A \text{ row-stochastic},  \phi} \mathbb{E}_{s,a} \| r(s,a) -  g_\phi(f(r(s,a)))     \|^2
\end{equation}
where $A > 0$ denotes that each element of $A$ is positive. This approach, combining compression with reconstruction, is widely employed to capture the essential features of input data while discarding irrelevant information \citep{baldi12autoencoder,kingma14vae,berahmand24autoencoder}. However, solving the optimization problem in \eqref{eq:method_reconstruct} is more challenging than conventional autoencoder-style learning, where the encoder is a general neural network trained without constraints. In contrast, our method must ensure that the matrix \(A\) satisfies both the positivity constraint \( A > 0 \) and row-stochasticity during online training.

To overcome this challenge, we introduce a novel approach by parameterizing \(A\) using softmax parameterization, ensuring both positivity and row-stochasticity constraints are satisfied throughout training. Our implementation in PyTorch \citep{paszke19pytorch} effectively applies this parameterization, and we solve the optimization in \eqref{eq:method_reconstruct} using stochastic gradient descent in an online setting. The reconstruction loss is minimized alongside the training of the parameterized multi-objective action-value function \( Q_\theta(s, a, \omega_m) \), which approximates \( Q_m^*(s, a, \omega_m) \) as defined in \eqref{eq:method_q_star}. Additionally, we found that applying dropout \citep{srivastava14dropout} to \( g_\phi \) during training in \eqref{eq:method_reconstruct} further enhances overall performance.

\section{Experiments} \label{sec:experiment_dim_reduce}

\subsection{Environment and Baselines} \label{subsec:env_and_baselines}

While various practical applications require addressing many objectives \citep{fleming05manyobjex1,li15manyobjsurvey,hayes22survey}, there currently exist few MORL simulation environments with reward dimensions exceeding four \citep{hayes22survey,felten_toolkit_2023}. To address this issue, {we considered the following two MORL environments: LunarLander-5D \citep{hung23qpensieve} and our modified implementation of an existing traffic light control environment \citep{sumorl} to create a sixteen-dimensional reward setting.}

{ LunarLander-5D is a challenging MORL environment with a five-dimensional reward function where the agent aims to land a lunar module on the moon's surface successfully. Each reward dimension represents: (i) a sparse binary indicator for successful landing (+ for success, - for crash), (ii) a combined measure of the module's position, velocity, and orientation, (iii) the fuel cost of the main engine, (iv) the fuel cost of the side engines, and (v) a time penalty. This environment presents significant challenges because failing to balance these objectives effectively can easily lead to unsuccessful landings \citep{felten_toolkit_2023,hung23qpensieve}. }

\begin{wrapfigure}{r}{0.4\textwidth}
        \begin{center} 
            \includegraphics[width=0.7\linewidth]{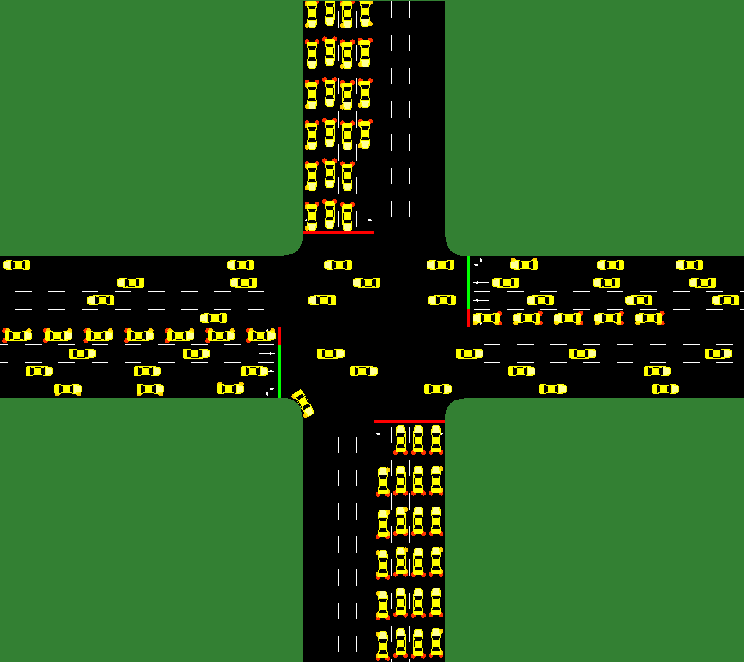}
        \end{center}
        \caption{A snapshot of our considered environment: traffic light control.  } \label{fig:traffic_high_dim}
\end{wrapfigure}

Traffic light control is a practical example of a problem that can be formulated as MORL, where efficiently balancing many correlated objectives is crucial \citep{hayes22survey}. As shown in Figure \ref{fig:traffic_high_dim}, the traffic intersection features four road directions (North, South, East, and West), each with four inbound and four outbound lanes. At each time step, the agent receives a state representing traffic flow information. The traffic light controller selects a phase as its action, and the reward is a sixteen-dimensional vector where each dimension corresponds to a measure proportional to the negative total waiting time of cars on the respective inbound lanes.

In our experiments, we used the MORL algorithm from \cite{yang19envq} as the base algorithm for the original reward space (Base). We incorporated several online dimension reduction methods to the vector rewards in the base algorithm, including online autoencoder (AE) \citep{bank20aebook}, incremental PCA \citep{cardot2018pca}, our online implementation of conventional batch-based NPCA \citep{zass2006npca}, and our proposed approach. We followed the training and evaluation framework outlined in Section \ref{subsec:method_framework}, setting $m=16$ when evaluating the base algorithm alone. 

For incremental PCA, we update the sample mean vector $\mu \in \mathbb{R}^K$ and the sample covariance matrix \(C \in \mathbb{R}^{K \times K}\) at each timestep $t$ using vector reward $r_t$.  We periodically perform eigendecomposition on \(C\) and select the top \(m\) eigenvectors \(u_1, \dots, u_m \in \mathbb{R}^K \) corresponding to the largest eigenvalues, maximizing \(\sum_{l=1}^m u_l^\top C u_l\). We construct the matrix \(U = [u_1, \dots, u_m] \in \mathbb{R}^{K \times m}\) so that \(U^\top (r - \mu) \in \mathbb{R}^{m}\) represents the reduced vector for \(r\), following the PCA assumption that the transformed vectors are centered \citep{cardot2018pca}.

For the online implementation of original NPCA \citep{zass2006npca}, we directly parameterize \(U = [u_1, \dots, u_m]  \in \mathbb{R}^{K \times m}\) with a non-negativity constraint for efficient training (which we denote as $U \geq 0$). {This direct parameterization removes an extra hyperparameter tuning for the constraint $U \geq 0$ and gives a fair implementation compared with our method that also uses direct parameterization.} We optimize the objective \(\max_{U \geq 0} \sum_{l=1}^m u_l^\top C u_l - \beta \| U^\top U - I_m \|^2\) using gradient descent, with a hyperparameter $\beta>0$. Balancing the PCA loss with the orthonormality constraint creates a trade-off between capturing principal component information and maintaining orthonormal basis vectors. Both PCA and NPCA do not use reconstructor $g_\phi$.

In the traffic environment, we set \(m = 4\) for all online dimension reduction methods. We set $m=3$ for LunarLander-5D. To enhance the statistical reliability of our experimental results, we applied a 12.5$\%$ trimmed mean by excluding the seeds with maximum and minimum hypervolume values over eight random seeds and reporting the averages of the metrics over the remaining six random seeds. This offers better robustness against outliers than the standard average \citep{maronna2019robust}. (Additional implementation details can be found in Appendix \ref{append:experiment_details}.)

\subsection{Results} \label{subsec:main_result}

\begin{table}[h!] 
\centering
\resizebox{\textwidth}{!}{
\begin{tabular}{|c|c|c|c|c|c|}
\hline
 & Base & PCA & AE & NPCA & \textbf{Ours} \\
\hline
\textbf{HV($\times 10^{7}, \uparrow$)} & $3.1 \pm 4.7$ & $3.2 \pm 4.2$ & $0$ & $1.7 \pm 3.1$ & $\mathbf{25.6} \pm 6.9$ \\
\hline
\textbf{SP($\times 10^{2}, \downarrow$)}  & $31.2 \pm 25.3$ & $188.6 \pm 180.7$ & $31.3 \pm 30.6$  & $53.0 \pm 47.7$ & $\mathbf{1.1} \pm 1.2$ \\
\hline
\end{tabular}
}
\caption{ Performance comparison in LunarLander-5D environment, with the reference point for hypervolume evaluation set to $(0, -100, -100, -100, -100) \in \mathbb{R}^{5}$. HV: hypervolume, SP: sparsity.  } \label{tab:result_performance_lunar}
\end{table}

Table \ref{tab:result_performance_lunar} demonstrates that in the LunarLander-5D environment, our algorithm outperforms the baseline methods in both hypervolume and sparsity metrics. Specifically, our approach improves the base algorithm's hypervolume by a factor of 8.3. It also reduces sparsity to the ratio of $\frac{1}{28.4}$, resulting in more diverse and better-performing solutions. Note that the hypervolume values reflect successful landing episodes, so our dimension reduction is more efficient for achieving successful landing and balancing remaining objectives simultaneously than the baselines.

\begin{table}[h!] 
\centering
\resizebox{\textwidth}{!}{
\begin{tabular}{|c|c|c|c|c|c|}
\hline
 & Base & PCA & AE & NPCA & \textbf{Ours} \\
\hline
\textbf{HV($\times 10^{61}, \uparrow$)} & $4.4 \pm 6.8$ & $0$ & $0.007 \pm 0.018$ & $19.4 \pm 15.3$ & $\mathbf{166.9} \pm 48.1$ \\
\hline
\textbf{SP($\times 10^{5}, \downarrow$)}  & $1842 \pm 1290$ & $3837 \pm 2164$ & $7834 \pm 3323$  & $34.2 \pm 52.3$ & $\mathbf{2.3} \pm 1.0$ \\
\hline
\end{tabular}
}
\caption{ {Performance comparison in our traffic experiment where we set reference point for hypervolume evaluation to $(-10^4, -10^4, \cdots, -10^4) \in \mathbb{R}^{16}$ }. HV: hypervolume, SP: sparsity.  } \label{tab:result_performance}
\end{table}

{ Next, Table \ref{tab:result_performance} demonstrates that our algorithm consistently outperforms the baseline methods in the traffic environment with sixteen-dimensional reward. Our algorithm improves the base algorithm's hypervolume by a factor of 37.9 while significantly reducing sparsity, indicating that reward dimension reduction effectively scales the base algorithm to higher-dimensional spaces. } The PCA-based dimension reduction is an affine transformation, but because the matrix does not meet the positivity condition in Theorem \ref{thm:thm1}, it fails to guarantee Pareto-optimality as outlined in \eqref{eq:pareto_preserve}. Similarly, the AE method uses a nonlinear transformation that fails to satisfy the linearity requirement in Theorem \ref{thm:thm1}, producing worse hypervolume and sparsity than the base case.

\begin{table}[h!] 
\centering
\resizebox{0.7\textwidth}{!}{
\begin{tabular}{|c|c|c|c|}
\hline
  & NPCA & NPCA-ortho & \textbf{Ours} \\
\hline
\textbf{HV($\times 10^{61}, \uparrow$)}  & $19.4 \pm 15.3$ & $0.3 \pm 0.5$ & $\mathbf{166.9} \pm 48.1$ \\
\hline
\textbf{SP($\times 10^{5}, \downarrow$)}   & $34.2 \pm 52.3$ & $203.7 \pm 24.1$ & $\mathbf{2.3} \pm 1.0$ \\
\hline
\textbf{Rank}   & $1$ & $4$ & $4$ \\
\hline
\end{tabular}
}
\caption{Performance comparison in the traffic experiment with NPCA and NPCA-ortho where ``Rank" refers to the rank of the matrix in each method.} \label{tab:result_performance_ortho}
\end{table}

In Table \ref{tab:result_performance}, our method outperforms NPCA by significantly increasing hypervolume and reducing sparsity, with improvements of 8.6x in hypervolume. Although NPCA employs an affine transformation with a nonnegative matrix, its online variant encounters instability due to the conflicting objectives of optimizing the principal component loss while maintaining the orthonormality constraint. As shown in Table \ref{tab:result_performance_ortho}, the best-performing NPCA models, in terms of hypervolume and sparsity, had matrices of rank 1. The learning process prioritized maximizing the PCA loss, at the expense of enforcing the orthonormality constraint, producing completely overlapping basis vectors. 

To address this, we tuned hyperparameters to emphasize the orthonormality constraint, denoting this variant as NPCA-ortho. However, Table \ref{tab:result_performance_ortho} shows that this adjustment led to a performance decline compared to NPCA. The reason is that assigning more weight to the orthonormality constraint weakened the PCA update, significantly reducing its ability to capture relevant information from the original reward space. Additionally, we found that balancing the two losses was highly sensitive and difficult to fine-tune. In contrast, our method avoids these issues, offering a more stable and effective solution for reward dimension reduction without the trade-offs inherent in NPCA’s design. 

To better illustrate results in high-dimensional space, we visualized the Pareto frontier points obtained from the traffic environment using t-SNE \citep{van2008visualizing}, as detailed in Appendix \ref{append:visualize_tsne}. We also present hypervolume values for different reference points and an additional metric called  Expected Utility Metric (EUM) \citep{zintgraf2015eum,hayes22survey} in Appendix \ref{append:more_hv_ref_point} and \ref{append:discuss_eum}, respectively.

\subsection{Ablation Study} \label{subsec:ablation}

In this section, we investigate the impact of key components in our proposed dimension reduction approach. First, we analyze the effect of the constraints imposed on the dimension reduction function $f$. Specifically, we examine three aspects: bias, row-stochasticity, and positivity to assess their influence on preserving Pareto-optimality. Next, we evaluate the effect of applying dropout to the reconstructor \( g_\phi \) in \eqref{eq:method_reconstruct}. We conducted our ablation study in the traffic environment.

\begin{table}[h!] 
\centering
\resizebox{\textwidth}{!}{
\begin{tabular}{|c|c|c|c|c|c|}
\hline
 & \textbf{Ours} & \text{+bias} & \text{-rowst} & \text{-positivity} & \text{-rowst, -positivity} \\
\hline
\textbf{HV($\times 10^{61}, \uparrow$)} & $\mathbf{166.9}$ & $132.9$ & $46.8$ & $0$ & $0$ \\
\hline
\textbf{SP($\times 10^{5}, \downarrow$)}  & $\mathbf{2.3}$ & $2.7$  & $38.8$ & $4066.6$ & $5310.7$ \\
\hline
\end{tabular}
}
\caption{ Ablation study examining the effect of different conditions on the dimension reduction function \( f \). ``+bias" adds a bias term \( b \) in \( f \); ``-rowst" removes the row-stochasticity constraint while retaining the positivity condition; ``-positivity" removes the positivity condition.   } \label{tab:result_ablation_conditions}
\end{table}

Table \ref{tab:result_ablation_conditions} presents the results of our first ablation study. Adding a bias term to \( f \) results in a slight decrease in hypervolume and an increase in sparsity compared to our method. However, the impact is less severe than the other modifications. While, in theory, the bias term \( b \) does not affect the determination of the optimal policy under discounted reward settings (as shown in \eqref{eq:optimal_pi_bias_irrespective}), in practice, introducing a bias term offers minimal benefit, so a purely linear transformation is sufficient.

We next observe a performance drop when the row-stochasticity condition is removed. Notably, sparsity increased sharply by a factor of 16.9, highlighting the detrimental impact of this removal. Note that the direction of each preference vector $\omega_m$, not the magnitude, matters for the determination of optimal policy $\pi^*_m$ in \eqref{eq:method_pi_star_def}.  By confining the search space to the simplex, the learning process can focus on finding the correct direction to extract essential reward information, rather than expending unnecessary effort on adjusting magnitudes. Consequently, enforcing the row-stochasticity constraint enhances learning efficiency, leading to more diverse solutions.

If we remove the positivity condition while maintaining the row-stochasticity constraint, the algorithm produces zero hypervolume. This is due to the lack of the positivity condition required by Theorem \ref{thm:thm1}. Finally, further removing the row-stochasticity gives \( f(r) = Ar \) with a generic linear matrix \( A \) that also fails to preserve Pareto-optimality in \eqref{eq:pareto_preserve}. 

In summary, the positivity condition is essential for maintaining Pareto-optimality, while the row-stochasticity constraint improves the efficiency of online learning under the positivity condition. 

\begin{table}[h!] 
\centering
\resizebox{0.5\textwidth}{!}{
\begin{tabular}{|c|c|c|}
\hline
 & \textbf{Ours}  & Without dropout \\
\hline
\textbf{HV($\times 10^{61}, \uparrow$)} & $\mathbf{166.9}$  & $127.1$ \\
\hline
\textbf{SP($\times 10^{5}, \downarrow$)}  & $\mathbf{2.3}$  & $9.5$ \\
\hline
\end{tabular}
}
\caption{Ablation study on the effect of applying dropout to the reconstructor \( g_\phi \) in \eqref{eq:method_reconstruct}. } \label{tab:result_ablation_dropout}
\end{table}

Next, we evaluated the effect of applying dropout to \( g_\phi \) in \eqref{eq:method_reconstruct}. As shown in Table \ref{tab:result_ablation_dropout}, omitting dropout resulted in a slight decrease in hypervolume, while sparsity increased by a factor of 4.1. Dropout can be interpreted as a form of regularization, approximately equivalent to $L_2$ regularization after normalizing the input vectors \citep{wager13dropoutl2}. $L_2$ regularization helps prevent certain weights in a neural network from becoming excessively large.

In our context, each output feature of the reconstructor \( g_\phi \) corresponds to an objective in MORL, and there are implicit correlations among these $K$ objectives. By applying dropout, we mitigate the risk of the model prematurely focusing on a subset of the $K$ objectives by preventing the weights of \( g_\phi \) from growing excessively large. This explains why sparsity increases when dropout is not used: the model may lose the opportunity to generate diverse solutions along the Pareto frontier, as learning tends to focus prematurely on a subset of the $K$ objectives without weight regularization. (We also provide an ablation study on the effect of the reduced dimensionality $m$ in Appendix \ref{append:ablation_dimension_m}.)

\section{Conclusion and Future Work}

In this paper, we proposed a simple yet effective reward dimension reduction technique to address the scalability challenges of multi-policy MORL algorithms. Our dimension reduction method efficiently captures the key features of the reward space, enhancing both learning efficiency and policy performance while preserving Pareto-optimality during online learning. We also introduced a new training and evaluation framework tailored to reward dimension reduction, demonstrating superior performance compared to existing methods. Our future work includes developing more benchmarks and informative metrics for high-dimensional MORL scenarios to gain deeper insight and further advance the field.

\section*{Reproducibility Statement}

We provide detailed descriptions of each algorithm in Section \ref{subsec:env_and_baselines} and Appendix \ref{append:experiment_details}, including the techniques, fine-tuned hyperparameters, and infrastructures used in our experiments. The evaluation protocol for performance comparison is described in Section \ref{subsec:method_framework}. Furthermore, Theorem \ref{thm:thm1} is self-contained, so it is easy to verify the theoretical results. The link to our code is \url{https://github.com/Giseung-Park/Dimension-Reduction-MORL}.

\section*{Acknowledgments}

This work was supported in part by Institute of Information $\&$ Communications Technology Planning $\&$ Evaluation (IITP) grant funded by the Korea government (MSIT) (No. RS-2024-00457882, AI Research Hub Project) and in part by Institute of Information $\&$ Communications Technology Planning
$\&$ Evaluation (IITP) grant funded by the Korea government (MSIT) (No. 2022-0-00469, Development of Core Technologies for Task-oriented Reinforcement Learning for Commercialization of
Autonomous Drones).

\bibliography{references}
\bibliographystyle{iclr2025_conference}

\newpage
\appendix

\section{ {Limitation and Future Work} }

While our approach offers a promising solution for scaling MORL algorithms, several avenues remain for future research. First, as discussed in Section \ref{subsec:env_and_baselines}, the lack of benchmarks for environments with more than ten objectives limits the comprehensive validation of our method. Developing robust benchmarks for high-dimensional MORL scenarios is a crucial direction for our future research. 

Second, recent research in MORL has focused on developing additional metrics to better evaluate performance, recognizing that the data behavior in MORL is more complex than in standard RL. High-dimensional scenarios are difficult to visualize, and data behavior often deviates from intuitive expectations \citep{lee2007dimnonlinear}. Therefore, designing informative metrics beyond standard measures like hypervolume and sparsity is essential for gaining deeper insights and advancing the field. 

Third, although we provided mathematical conditions for preserving Pareto-optimality in Theorem \ref{thm:thm1}, these are only sufficient conditions. We investigated the effect of each condition in Section \ref{subsec:ablation}. However, our method's theoretical guarantees will be more solid if we establish the necessary conditions that pinpoint when Pareto-optimality fails. {We provide a detailed discussion in Appendix \ref{append:discuss_necessity}. }

Fourth, while our approach enables effective training for scalable MORL, for practical use, test preference vectors in their original high-dimensional form must be reduced to the lower-dimensional space learned by our model. Developing methods for preference vector reduction, and potentially integrating preference elicitation mentioned in Section \ref{sec:related_work}, will be essential for making our approach more practical and complete.
 
Lastly, {our method can be extended in various directions.} For example, constrained MORL represents a promising direction, especially for safety-critical tasks where additional constraints must be considered. This extension could open up new applications where optimality and safety are paramount. Also, combining reward dimension reduction with reward canonicalization \citep{gleave2020quantifying} and extending reward linear shifting \citep{sun2022exploit} to high-dimensional offline MORL represent promising avenues for extending our work, both theoretically and experimentally.

\newpage 
\section{ {Additional Experiments} } \label{append:additional_experiments} 

\subsection{ {Visualization of the Pareto frontiers} } \label{append:visualize_tsne}

\begin{figure}[!ht]
    \begin{center} 
        \includegraphics[width=0.7\linewidth]{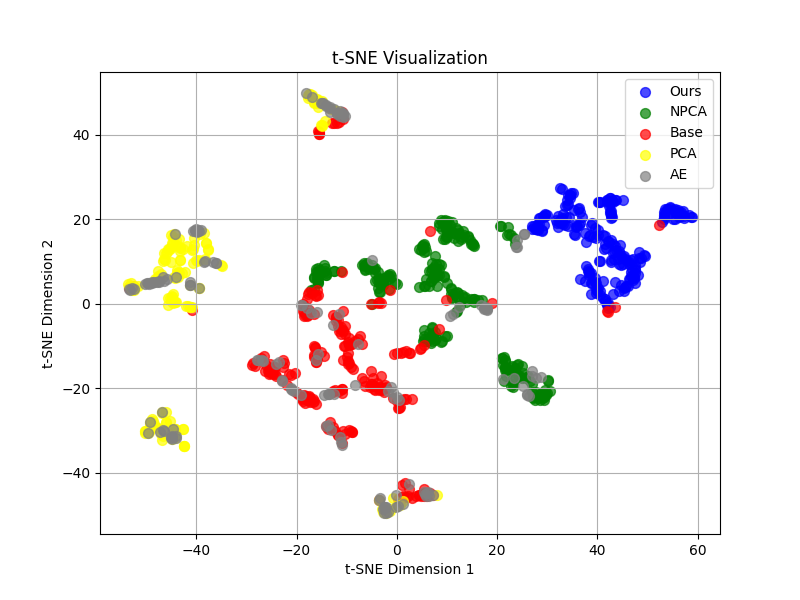}
    \end{center}
    \caption{ {t-SNE visualization of the acquired Pareto frontier points} } \label{fig:tsne}
\end{figure}

{In Figure \ref{fig:tsne}, we visualized the Pareto frontier obtained in the traffic environment for each algorithm using t-SNE \citep{van2008visualizing}. We emphasize that \textbf{our primary objective is to cover a broad region of the  Pareto frontier, not merely to cover a wide region of a high-dimensional reward space itself}. Although AE solutions may appear widely distributed, this does not necessarily imply extensive coverage of the  Pareto frontier because the  Pareto frontier is a subset of the original space. Given that AE yields a low hypervolume, it is less likely to represent a wide range of the  Pareto frontier.}

{According to the overlap analysis, smaller overlaps with the Base method suggest a different local structure, for example, either in a way of our method (with high hypervolume) or a way of PCA (with very low hypervolume). This qualitative difference suggests that our method and PCA are distinct in their approach to exploring the solution space. Also, NPCA overlaps with more points from the Base and AE methods than ours, demonstrating the insufficiency of NPCA in covering a larger region of the Pareto frontier than the Base method.}

\subsection{ {Hypervolumes with different reference points} }\label{append:more_hv_ref_point} 

{ In our traffic environment, we selected the reference point \( (-10^4, -10^4, \dots, -10^4) \in \mathbb{R}^{16} \) based on observations that, after the initial exploration phase, most points in the current Pareto fronts fell within this defined region (except for PCA). However, some points may deviate from this region. To account for these outliers, the reference points can be adjusted accordingly. We evaluated the hypervolume using different reference points in both the traffic environment and LunarLander-5D. As shown in Tables \ref{tab:result_performance_lunar_moreref} and \ref{tab:result_performance_moreref}, our algorithm consistently outperforms the baseline methods. }

\begin{table}[h!] 
\centering
\resizebox{\textwidth}{!}{
\begin{tabular}{|c|c|c|c|c|c|}
\hline
 & Base & PCA & AE & NPCA & \textbf{Ours} \\
\hline
\textbf{HV1($\times 10^{7}, \uparrow$)} & $3.1 \pm 4.7$ & $3.2 \pm 4.2$ & $0$ & $1.7 \pm 3.1$ & $\mathbf{25.6} \pm 6.9$ \\
\hline
\textbf{HV2($\times 10^{8}, \uparrow$)} & $7.6 \pm 9.0$ & $8.8 \pm 7.9$ & $0$ & $4.9 \pm 6.8$ & $\mathbf{37.1} \pm 6.7$ \\
\hline
\end{tabular}
}
\caption{ { Performance comparison in the LunarLander-5D environment. The reference points for hypervolume evaluation are set to \((0, -100, -100, -100, -100) \in \mathbb{R}^5\) for HV1 and \((0, -150, -150, -150, -150) \in \mathbb{R}^5\) for HV2. } HV: hypervolume.  } \label{tab:result_performance_lunar_moreref}
\end{table}

\begin{table}[h!] 
\centering
\resizebox{\textwidth}{!}{
\begin{tabular}{|c|c|c|c|c|c|}
\hline
 & Base & PCA & AE & NPCA & \textbf{Ours} \\
\hline
\textbf{HV1($\times 10^{61}, \uparrow$)} & $4.4 \pm 6.8$ & $0$ & $0.007 \pm 0.018$ & $19.4 \pm 15.3$ & $\mathbf{166.9} \pm 48.1$ \\
\hline
\textbf{HV2($\times 10^{67}, \uparrow$)} & $5.0 \pm 6.7$ & $0$ & $0.3 \pm 0.7$ & $11.5 \pm 4.8$ & $\mathbf{29.3} \pm 3.4$ \\
\hline
\textbf{HV3($\times 10^{73}, \uparrow$)} & $1.6 \pm 0.8$ & $0$ & $0.2 \pm 0.3$ & $1.9 \pm 0.4$ & $\mathbf{2.9} \pm 0.2$ \\
\hline
\end{tabular}
}
\caption{ {Performance comparison in the traffic experiment. The reference points for hypervolume evaluation are set to $(-10^4, -10^4, \cdots, -10^4) \in \mathbb{R}^{16}$ for HV1, $(-2 \times 10^4, -2 \times 10^4, \cdots, -2 \times 10^4) \in \mathbb{R}^{16}$ for HV2, and $(-4 \times 10^4, -4 \times 10^4, \cdots, -4 \times 10^4) \in \mathbb{R}^{16}$ for HV3. } HV: hypervolume.  } \label{tab:result_performance_moreref}
\end{table}

\subsection{ {Discussion on EUM metric} } \label{append:discuss_eum}

To ensure that the current Pareto frontier $\mathcal{F}$ captures a diverse set of return vectors in the original reward space, we introduce an additional metric: the Expected Utility Metric (EUM) \citep{zintgraf2015eum, hayes22survey}. The EUM is defined as follows:
\begin{equation}
    EUM(\mathcal{F}, f_s, \Omega_{K, \bar{N}_e}) := \mathbb{E}_{\omega \in \Omega_{K, \bar{N}_e}}[ \max_{r \in \mathcal{F}} f_s(\omega, r)  ].
\end{equation}
Here, $\Omega_{K, \bar{N}_e} \subset \Delta^K$ represents a set of $\bar{N}_e$ preferences in the original reward space and $f_s$ is a scalarization function. We consider EUM because it effectively evaluates an agent's performance across a broad range of preferences \citep{hayes22survey}, aiming for a higher EUM to prevent the Pareto frontier from covering only a narrow region within the original reward space. To compute EUM, we define $\Omega_{K, \bar{N}_e}$ as the set of $\bar{N}_e$ normalized points evenly distributed on the $(K-1)$-simplex $\Delta^K$. We also set the scalarization function by $f_s(\omega, r) = \|\text{proj}_{\omega}[r]\|$, representing the projected length of the vector $r$ onto $\omega \in \Omega_{K, \bar{N}_e}$. We set $\bar{N}_e$ to 126 and 15,504 for LunarLander and the traffic environment, respectively.

\begin{table}[h!] 
\centering
\resizebox{\textwidth}{!}{
\begin{tabular}{|c|c|c|c|c|c|}
\hline
 LunarLander-5D & Base & PCA & AE & NPCA & \textbf{Ours} \\
\hline
\textbf{EUM($\uparrow$)}  & $-25.8 \pm 24.3$ & $-20.2 \pm 21.5$ & $-76.2 \pm 48.6$  & $-28.4 \pm 13.9$ & $\mathbf{-11.5} \pm 5.4$ \\
\hline
\end{tabular}
}
\caption{ {Performance comparison in LunarLander-5D experiment. EUM: expected utility metric. } } \label{tab:eum_lunar}
\end{table}

\begin{table}[h!] 
\centering
\resizebox{\textwidth}{!}{
\begin{tabular}{|c|c|c|c|c|c|}
\hline
 Traffic & Base & PCA & AE & NPCA & \textbf{Ours} \\
\hline
\textbf{EUM($\times 10^{3}, \uparrow$)}  & $-3.4 \pm 2.9$ & $-35.1 \pm 15.2$ & $-16.1 \pm 8.8$  & $-4.4 \pm 1.2$ & $\mathbf{-2.0} \pm 1.0$ \\
\hline
\end{tabular}
}
\caption{ { Performance comparison in our traffic experiment. EUM: expected utility metric. }  } \label{tab:eum_traffic}
\end{table}

As demonstrated in Tables \ref{tab:eum_lunar} and \ref{tab:eum_traffic}, our method outperforms the baseline approaches in terms of the EUM. It is important to note that during training, the Base method uses equidistant points in the original reward space, which naturally leads to high EUM values, especially when the reward space has a high dimensionality. However, our algorithm demonstrates superior performance compared to other reward dimension reduction methods, particularly in higher-dimensional environments like the traffic scenario. 

\section{ Effect of the reduced dimensionality } \label{append:ablation_dimension_m}

\begin{table}[h!] 
\centering
\resizebox{0.6\textwidth}{!}{
\begin{tabular}{|c|c|c|c|c|c|}
\hline
 $m$ & 2 & 4 & 6 & 8 & 10 \\
\hline
\textbf{HV($\times 10^{63}, \uparrow$)} & $1.4$ & $1.7$ & $1.7$ & $1.9$ & $1.1$ \\
\hline
\textbf{SP($\times 10^{5}, \downarrow$)}  & $1.5$ & $2.3$ & $30$  & $20$ & $81$ \\
\hline
\end{tabular}
}
\caption{  Ablation study on the effect of the reduced dimensionality $m$ of $f$.  } \label{tab:ablation_dimension_m}
\end{table}

Table \ref{tab:ablation_dimension_m} presents the effect of varying the reduced dimensionality \( m \) in the traffic environment. As \( m \) increases from 4 to 6, the sparsity increases significantly while the hypervolume remains unchanged, resulting in scenarios where only a few objectives perform well. When \( m \) increases further from 8 to 10, both sparsity and hypervolume decrease, leading to a lower-quality set of returns in the original reward space.

$m$ is a hyperparameter for our algorithm, and selecting an appropriate value in practice is achievable. This is because we can estimate the effective rank of the sample covariance matrix recursively \citep{cardot2018pca} during the early exploration phase of the RL algorithm, rather than throughout the entire training process. We found that this straightforward approach performs effectively in our experiments.

\section{ Discussion on the necessary condition of Theorem 1 } \label{append:discuss_necessity}

We acknowledge that theoretically analyzing the opposite direction of Theorem \ref{thm:thm1} is challenging. To find conditions for a counterexample of $f$ beyond $f(r) = Ar + b$ with $A \in \mathbb{R}^{m \times K}_+$, we may follow a similar flow in the proof of the Theorem \ref{thm:thm1}. Given $\omega_m \in \Delta^m$, suppose $\exists \pi' \in \Pi$ s.t. $\mathbb{E}_{\pi'} \left[ \sum_{t=0}^\infty \gamma^t r_t \right] >_P \mathbb{E}_{\pi^*_m(\cdot|\cdot, \omega_m)} \left[ \sum_{t=0}^\infty \gamma^t r_t \right]$ in the original reward space. We first impose that $f = [f_1, \cdots, f_m]$ be a \textit{monotonically strictly increasing function} satisfying $A >_P B \Rightarrow f_j(A) > f_j(B)$ for all $1 \leq j \leq m$ \citep{hayes22survey}. Then we have $f ( \mathbb{E}_{\pi'} \left[ \sum_{t=0}^\infty \gamma^t r_t \right] ) >_P f( \mathbb{E}_{\pi^*_m(\cdot|\cdot, \omega_m)} \left[ \sum_{t=0}^\infty \gamma^t r_t \right] )$ in the reduced-dimensional space. If $f$ further satisfies 

\begin{equation} \label{eq:condition2}
    \mathbb{E}_{\pi'} \left[ \sum_{t=0}^\infty \gamma^t f ( r_t ) \right]  >_P \mathbb{E}_{\pi^*_m(\cdot|\cdot, \omega_m)} \left[ \sum_{t=0}^\infty \gamma^t f( r_t ) \right],
\end{equation}

this gives a contradiction and $f$ becomes our target counterexample. This is directly satisfied when $f$ is affine. 
However, it is difficult to find such an example other than affine functions, primarily due to the inequality in $>_P$.

Alternatively, we conducted an empirical analysis by relaxing the condition in \eqref{eq:condition2} and directly optimizing $f$ under the sole constraint of a strictly monotonically increasing function. As part of our ablation study, we parameterized $f$ as a strictly monotonically increasing function using a neural network (similar to approaches like \cite{rashid18qmix} but maintaining strict monotonicity) and trained it within the traffic environment, denoted by ``Monotone." 

\begin{table}[h!] 
\centering
\resizebox{0.5\textwidth}{!}{
\begin{tabular}{|c|c|c|}
\hline
 & \textbf{Ours}  & Monotone \\
\hline
\textbf{HV($\times 10^{61}, \uparrow$)} & $\mathbf{166.9}$ & $0$  \\
\hline
\textbf{SP($\times 10^{5}, \downarrow$)}  &  $\mathbf{2.3}$  & $5353.0$ \\
\hline
\end{tabular}
}
\caption{ Ablation study on the effect of imposing strict monotonicity on $f$.}\label{tab:result_ablation_monotone}
\end{table}

Table \ref{tab:result_ablation_monotone} shows that this approach resulted in a hypervolume of zero, similar to the ``-positivity" and ``-rowst, -positivity" cases in Table \ref{tab:result_ablation_conditions}. This suggests that merely imposing a strictly monotonically increasing function condition is insufficient to construct a meaningful counterexample in practice. Importantly, nonzero hypervolume was only achieved when both the affine and positivity conditions were satisfied, as demonstrated in the ``-rowst" case from Table \ref{tab:result_ablation_conditions}. These results underscore the empirical effectiveness of our algorithm based on Theorem \ref{thm:thm1}.

\section{Implementation Details} \label{append:experiment_details}

\subsection{Source Code and Environment}

For our implementation, we adapted morl-baselines \citep{felten_toolkit_2023} and integrated it with sumo-rl \citep{sumorl}, a toolkit designed for traffic light control simulations, as discussed in Section \ref{sec:experiment_dim_reduce}. For LunarLander-5D, we used morl-baselines \citep{felten_toolkit_2023} with the reward function provided by the source code of \cite{hung23qpensieve}.

The traffic light system offers four distinct phases: (i) Straight and right turns for North-South traffic, (ii) Left turns for North-South traffic, (iii) Straight and right turns for East-West traffic, and (iv) Left turns for East-West traffic. At each time step, the agent receives a 37-dimensional state, which includes a one-hot encoded vector representing the current traffic light phase, the number of vehicles in each incoming lane, and the number of vehicles traveling at less than 0.1 meters per second for each lane. The simulation starts with a one-hot vector where the first element is set to one. Based on this state, the controller chooses the next traffic light phase. The time between phase changes is 20 seconds, with each episode spanning 4000 seconds, or 200 timesteps. When transitioning to a different phase, the last 2 seconds of the interval display a yellow light to minimize vehicle collisions. The reward, represented by a sixteen-dimensional vector, is calculated as the negative total waiting time for vehicles on each inbound lane. The simulation runs for 52,000 timesteps in total. For LunarLander-5D, the simulation runs for 2M timesteps.

\subsection{Baselines}

For our proposed method and the baselines, we set the discount factor \(\gamma = 0.99\) and use a buffer size of 52,000 and 1M for traffic and LunarLander, respectively. In Base algorithm \citep{yang19envq}, we utilize a multi-objective action-value network \(Q_\theta\) with an input size of observation dimension plus $K$, two hidden layers of 128(LunarLander)/256(traffic) units each, and ReLU activations after each hidden layer. The output layer has a size of \(|\mathcal{A}| \times K \). For the dimension reduction methods, the \(Q_\theta\) network has an input size of input size of observation dimension plus $m$, two hidden layers of 128(LunarLander)/256(traffic) units with ReLU activations, and an output layer of size \(|\mathcal{A}| \times m\).

We train \(Q_\theta\) using the Adam optimizer \citep{kingma14adam}, applying the loss function after the first 200 timesteps, with a learning rate of 0.0003 and a minibatch size of 32. Exploration follows an \(\epsilon\)-greedy strategy, with \(\epsilon\) linearly decaying from 1.0 to 0.05 over the first $10\%$ of the total timesteps. The target network is updated every 500 timesteps. We update $\theta$ using the gradient \(\nabla_\theta \mathcal{L}(\theta)\), \(\mathcal{L}(\theta) = (1 - \lambda) L^{\text{main}}(\theta) + \lambda L^{\text{aux}}(\theta)\), where \(L^{\text{main}}(\theta)\) is the primary loss and \(L^{\text{aux}}(\theta)\) is the auxiliary loss in \cite{yang19envq}. The weight \(\lambda\) is linearly scheduled from 0 to 1 over the first $75\%$ and $25\%$ percent of the total timesteps in traffic and LunarLandar, respectively. Sampling preference vectors $\omega_m \in \Delta^m$ during training and execution follows the uniform Dirichlet distribution.

For the three online dimension reduction methods (our approach, the autoencoder, and our implementation of online NPCA), we utilize the Adam optimizer for updates. In our method, the matrix \(A\) is initialized with each entry set to \(1/K\). The neural network \(g_\phi\) has an input dimension of \(m \), two hidden layers of 32 units each, and ReLU activations after each hidden layer. The output layer has a size of \(K \). We use a dropout rate of 0.75 and 0.25 in in traffic and LunarLandar, respectively (with 0 meaning no dropout). Equation \ref{eq:method_reconstruct} is optimized with a learning rate of 0.0003 and an update interval of 5 timesteps.

For the autoencoder, the encoder network has an input size of \(K \), two hidden layers with 32 units each, and ReLU activations after each hidden layer. The output layer has a size of \(m \). The decoder follows the same architecture as \(g_\phi\), but without dropout. The reward reconstruction loss is optimized with a learning rate of 0.0001 and an update interval of 20 timesteps.

For the online NPCA, we use ReLU parameterization for efficient learning (also implemented in PyTorch \citep{paszke19pytorch}) to meet the constraint on matrix \(U\). The matrix \(U\) is initialized similarly with each entry set to \(1/K \). NPCA is optimized with a learning rate of 0.0001, an update interval of 20(traffic)/50(LunarLander) timesteps, and \(\beta =\) 50000(traffic)/1000(LunarLander). The reduced vector representation of \(r\) is \(U^T (r - \mu) \in \mathbb{R}^m\), following the PCA assumption that the transformed vectors are centered \citep{zass2006npca,cardot2018pca}. For NPCA-ortho in traffic, increasing the value of \(\beta\) did not yield better orthonormality, so we set the update interval to 5 timesteps, keeping the same \(\beta\) value.

For incremental PCA, we recursively update the sample mean vector of rewards as \(\mu_{t+1}  = \frac{t}{t+1}\mu_{t} + \frac{1}{t+1}r_{t+1} \in \mathbb{R}^K\) and the sample covariance matrix as \(C_{t+1} = \frac{t}{t+1} C_{t} + \frac{t}{(t+1)^2} (r_{t+1} - \mu_t)(r_{t+1} - \mu_t)^\top \in \mathbb{R}^{K \times K}\) for each timestep \(t\) \citep{cardot2018pca}. Every 20 timesteps, we eigen-decompose the covariance matrix, selecting the top \(m\) eigenvectors \(u_1, \dots, u_m \in \mathbb{R}^K\) corresponding to the largest eigenvalues, and update \(U = [u_1, \dots, u_m] \in \mathbb{R}^{K \times m}\). The reduced vector representation of \(r\) is \(U^T (r - \mu) \in \mathbb{R}^m\), assuming the vectors are centered \citep{cardot2018pca}. \(U\) is initialized as a matrix with each entry set to \(1/K\). For evaluation, we generated fifteen and thirty five equidistant points on the simplex for LunarLander and the traffic environment, respectively. For evenly distributed sampling and calculating hypervolume and sparsity, we use the implementation provided in \cite{felten_toolkit_2023}. We use infrastructures of Intel Xeon Gold 6238R CPU @ 2.20GHz and Intel Core i9-10900X CPU @ 3.70GHz.

\end{document}